 \documentclass[pmlr,twocolumn,10pt]{jmlr} 





\usepackage{booktabs}
\usepackage{siunitx}

\usepackage[switch]{lineno}



\theorembodyfont{\upshape}
\theoremheaderfont{\scshape}
\theorempostheader{:}
\theoremsep{\newline}

\jmlrvolume{259}
\jmlryear{2024}
\jmlrsubmitted{LEAVE UNSET}
\jmlrpublished{LEAVE UNSET}
\jmlrworkshop{Machine Learning for Health (ML4H) 2024} 

\title[Training-Aware Risk Control for IMRT Quality Assurance with Conformal Prediction]{Training-Aware Risk Control for Intensity Modulated Radiation Therapies Quality Assurance with Conformal Prediction}

\author{%
\Name{Kevin He} \Email{khe7@jhu.edu}\\
\addr Department of Computer Science, Johns Hopkins University
\AND
\Name{David Adam} \Email{dadam3@jh.edu}\\
\addr Department of Radiation Oncology, Johns Hopkins School of Medicine
\AND
\Name{Sarah Han-Oh} \Email{yhanoh1@jhmi.edu}\\
\addr Department of Radiation Oncology, Johns Hopkins School of Medicine
\AND
\Name{Anqi Liu} \Email{aliu74@jhu.edu}\\
\addr Department of Computer Science, Johns Hopkins University
}


\begin{document}

\maketitle

\begin{abstract}

Measurement quality assurance (QA) practices play a key role in the safe use of Intensity Modulated Radiation Therapies (IMRT) for cancer treatment. These practices have reduced measurement-based IMRT QA failure below 1\%. However, these practices are time and labor intensive which can lead to delays in patient care. In this study, we examine how conformal prediction methodologies can be used to robustly triage plans. We propose a new training-aware conformal risk control method by combining the benefit of conformal risk control and conformal training. 
We incorporate the decision-making thresholds based on the gamma passing rate, along with the risk functions used in clinical evaluation, into the design of the risk control framework.
Our method achieves high sensitivity and specificity and significantly reduces the number of plans needing measurement without generating a huge confidence interval. Our results demonstrate the validity and applicability of conformal prediction methods for improving efficiency and reducing the workload of
the IMRT QA process.

\end{abstract}
\begin{keywords}
Conformal Prediction, IMRT QA
\end{keywords}

\paragraph*{Data and Code Availability}
In this project, we used 2 IMRT plan datasets collected during 2 time periods at the same site on the same machine at Johns Hopkins Hospital. The first dataset was from Johns Hopkins Hospital (JHH1) with cases from 6/23 to 8/23. The second dataset was from Johns Hopkins Hospital (JHH2) with cases from 9/23 to 12/23. The data is not publicly available but the code is available at: https://github.com/khe9370/Training-Aware-CRC



\section{Introduction}
\label{sec:intro}

Intensity Modulated Radiation Therapy (IMRT) is a form of cancer treatment that delivers a precise dose of radiation to a tumor while sparing the surrounding tissue \citep{Jaffray}. As an important part of the treatment process, IMRT plans for patients undergo measurement quality assurance (QA) to ensure accurate delivery of radiation \citep{AAPM-Task-Force-119, AAPM-Task-Force-218}. The most common way to measure quality is based on a gamma passing rate (GPR), which is the percentage of points on a dosimeter that meet a dose-difference and spatial-difference criteria \citep{Valdes,Interian,Lam}. If the GPR is inaccurately predicted and an unsafe IMRT plan is used to treat a patient, the adverse effects could include either overdosing or underdosing the tumor compared to the prescribed dose, and a potential overdose of healthy organs near the tumor \citep{Palta}. Overdosing healthy organs can result in radiation-induced side effects, which the patient may suffer from in the short or long term, depending on the type and severity of the side effects \citep{Eaton}. Underdosing the tumor reduces the tumor control probability, thereby decreasing the efficacy of radiotherapy \citep{Eaton}. Current American Association of Physicists in Medicine (AAPM) recommendations set universal tolerance of radiation limits at 95\% passing rate using a 3\% dose difference/2 mm spatial difference (3\%/2mm) criteria \citep{AAPM-Task-Force-218}. However, using these passing rates with 3\% dose difference/3mm spatial difference (3\%/3mm) criteria is still widely accepted \citep{AAPM-Task-Force-218, Interian}. Figure \ref{fig: workflow} shows a typical process of delivering IMRT treatment in a radiation oncology department, where QA plays a critical role. 

While IMRT QA practices significantly improve patient outcomes, it comes at the cost of being a resource-intensive and time-consuming process \citep{Valdes, Palta, Lam}. Medical physicists have to make iterative deliveries of the treatment plan to a dosimeter, sometimes overnight or on weekends, to make the adjustments necessary so the plan is deemed satisfactory \citep{Smilowitz}. In response to this challenge, hospitals have tried to develop machine learning models to triage plans that need to undergo measurement \citep{Valdes,Interian,Lam}. These models have shown the ability to make relatively accurate GPR predictions, but are difficult to deploy in practice since they struggle with classifying plan safety based upon universal tolerance limits \citep{Chan}. 
\begin{figure*}[h]
    \centering
    \includegraphics[width=0.99\linewidth]{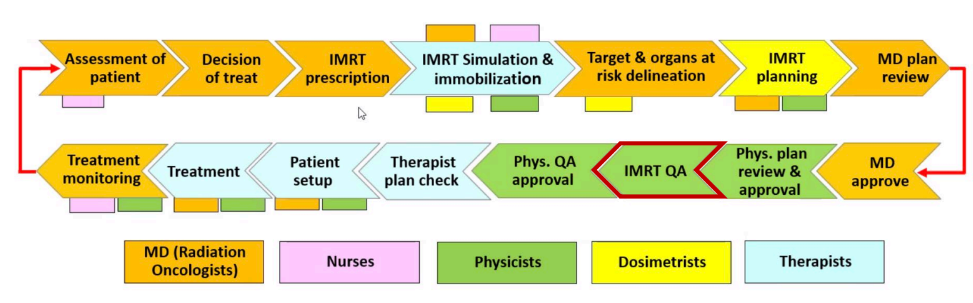}
    \caption{The workflow of a radiation oncology department for delivering IMRT treatment. The IMRT process involves multiple different disciplines and professionals in the workflow to design and implement a personalized treatment plan based on a patient's unique disease condition. The IMRT QA process is the evaluation process after a plan is designed and before a plan is deployed in the treatment. It is a safety-critical task as we do not want to deliver low-quality treatment to patients. Overdosing healthy organs can result in radiation-induced side effects, while underdosing the tumor reduces the tumor control probability, thereby decreasing the efficacy of radiotherapy.}
    \label{fig: workflow}
\end{figure*}

One paradigm that can help solve these problems is conformal prediction. Conformal prediction is a distribution-free uncertainty quantification framework that quantifies uncertainty by producing a statistically valid prediction region \citep{Angelopoulos-1}, meaning these regions will contain the true label with high probability.
By producing such a prediction region, we can triage measurement plans with significant ranges that extend into dangerous GPRs. Recent work has also explored the idea of controlling the risk of the prediction with guarantees \citep{Angelopoulos-2}. However, utilizing conformal prediction methodologies for improving clinical decision making in IMRT QA is still an open question.

In our work, we provide a conformal prediction based solution to reduce the number of plans that need to undergo IMRT QA while trying to ensure that no failing plan gets passed through by our model. We employ conformal training methods with a conformal risk control penalty for making predictions and generating triage decisions based upon a controlled risk.

\textbf{Our Key Contributions:}
\begin{enumerate}
     \item  We propose a training-aware conformal risk control method for quality measurement of IMRT treatment plans that considers a penalty from conformal risk control in the model training process. Our method incorporates the actual decision-making thresholding on the GPR and the risk functions used in clinical evaluation into the design of the risk control framework.
     
    \item We compare our proposed method with various baselines in the conformal prediction framework, including standard split conformal \citep{Angelopoulos-1}, conformal quantile regression \citep{Romano}, conformal risk control \citep{Angelopoulos-2}, and conformal training methods \citep{Stutz}. Evaluating on 2 real-world IMRT treatment plan datasets, our method achieves high sensitivity with better specificity than all the baselines, without generating a huge interval.

    \item Through our analysis, we demonstrate the applicability of conformal prediction methods in improving efficiency and reducing the workload of the IMRT QA process when the machine learning design choices are carefully made to match with clinical practices.


\end{enumerate}

\section{Related Work}

\subsection{IMRT QA Prediction with Machine Learning Models}
Several previous works have developed machine learning models to assess the quality of measurement plans. Most models in this field are trained on a combination of bi-dimensional patient-specific quality assurance results, plan complexity metrics, and linear accelerator performance metrics to predict the gamma passing rate as part of virtual patient-specific quality assurance. These models provide a point prediction for the GPR and their mean absolute error is relatively tight \citep{simon-robert-meyer}. \citet{Lam} trained a Random Forest Regressor, an adaBoost Regressor, and XGBoost Regressor to predict GPRs. They found that all models had mean absolute errors to be within 0.9-1\%. Similarly, \citet{Interian} developed a convolutional neural network model to perform this task and they had a mean absolute error of 0.74\%. 

However, mean absolute error does not directly translate to informative results for clinical decision-making. Medical physicists assess plan safety based upon a plan meeting a gamma pass rate threshold \citep{AAPM-Task-Force-119, AAPM-Task-Force-218}, as well as taking into account other information including the specific treatment region and so on. Virtual IMRT QA needs to be assessed based upon sensitivity and specificity. Almost all groups use at least a 90\% GPR threshold and many groups use a 95\% GPR threshold for a 3\%/3mm or 3\%/2mm criteria \citep{AAPM-Task-Force-218}. High sensitivity is required to ensure that no unsafe plan gets passed through. 

When we evaluate the models from \citep{Lam, Interian} by sensitivity and specificity on a 95\% gamma passing threshold with a 3\%/3mm criteria, they achieve the following sensitivity and specificity: Random Forest Regressor achieves 0.5625 in sensitivity and 0.92 in specificity;  AdaBoost Regressor achieves 0.688 in sensitivity and 0.92 in specificity; XGBoost Regressor achieves 0.688 in sensitivity and 0.91 in specificity; Convolutional Neural Networks achieve 0.622 in sensitivity and 0.995 in specificity \citep{AAPM-Task-Force-218, Interian, Lam}. These low sensitivities pose significant risks for patient safety and limit the adoption of machine learning models for IMRT QA.

\subsection{Conformal Prediction in the Healthcare Domain}
While conformal prediction methods have never been applied to improving the IMRT QA process, conformal prediction has been gaining traction in other healthcare applications due to its ability to provide uncertainty estimates alongside predictions. Most of these studies use conformal prediction to improve the quality of predictions on novel healthcare datasets \citep{Vasquez}. As a few examples, \cite{Pereira} use support vector machines (SVM) and applied standard inductive conformal prediction to make predictions on the progression of mild cognitive impairment to dementia. \citet{Papadopoulos} use a Multi-Layered Perceptron (MLP) with conformal prediction to make predictions for the diagnosis of abdominal pain based upon 33 associated abdominal pain symptoms. While these papers generate valuable clinical insights, they do not assess the quality of using different variations of conformal prediction methods for the problem. They also do not discuss how clinicians can apply uncertainty estimations to mitigate risk.

Within conformal prediction methodology research, researchers have also occasionally applied their new conformal prediction methods to healthcare applications. In the original conformal risk control paper \citep{Angelopoulos-2}, the authors apply their method to segmenting cancerous polyps and controlling the false negative rate risk \citep{Angelopoulos-2}. \cite{Argaw} study the heterogeneous effects of randomized clinical trials and make predictions using joint confidence intervals derived from conformal quantile regression. These works suggest that achieving bounded risk, based on a specific risk function, would be more suitable than a standard conformal prediction that only produces a coverage guarantee for the prediction sets in risk-sensitive settings. However, despite rigorous guarantees, conformal risk control remains a post hoc process, whose empirical performance is affected significantly by the model trained before the risk-controlling process.

In this work, we propose a training-aware conformal risk control method for predicting the GPR of IMRT treatment plans that considers a penalty from conformal risk control in the model training process. To better align the machine learning solution design and the empirical clinical practices, we also incorporate the actual decision-making threshold of the GPR and the risk functions used in clinical evaluation into the design of the risk control framework.

\section{Methodology}
We begin by defining the general structure of conformal prediction and then we describe conformal quantile regression, conformal risk control, and conformal training method. Finally, we explain the proposed method built off of conformal training and conformal risk control.

\subsection{Conformal Prediction}
In our general setup, we aim to predict a 3\%/3 mm GPR $Y_{i}$, bounded between 0 and 100, based upon its associated IMRT plan features called $X_{i}$ where i ranges from 1 to $n$, the total number of calibration samples. 
In conformal prediction, the main difference is that instead of generating a point estimate of GPR $Y_{i}$, we aim to generate a prediction set or interval that covers the true label $Y_{i}$ with high probability. The method is distribution-free as it does not require any assumptions on the data distribution. The method is also model-agnostic as it (the split conformal prediction version) is usually a posthoc process that can work with any kind of trained prediction models. So we can train a regression model to predict the GPR $Y_{i}$ first. Then we need to define a coverage value $\alpha$ that is our theoretical miscoverage rate such that $\mathbb{P}[Y\in\hat{C}] \geq 1 - \alpha$, where $\hat{C}$ is the prediction set generated in conformal prediction. In standard split conformal prediction, we split our feature dataset $X$ into three portions: $X_{train}$, $X_{val}$, and $X_{test}$. We split our 3\%/3mm GPR results into their corresponding $y_{train}$, $y_{val}$, and $y_{test}$. We utilize $X_{train}$ and $y_{train}$ to train our model and then use this model to make a set of predictions $\hat{y}_{val} $ based on $X_{val}$. We define non-conformity scores as $|\hat{y}_{val} - y_{val}|$, which is the absolute value of the error, i.e., the difference between the predicted GPR and the true GPR for each validation data point. Our one-side width of the confidence interval $I$ will be the $(1 - \alpha)$th-percentile largest non-conformity value in the validation data \citep{Vovk}. For our final confidence interval for evaluation $\hat{C}$, we generate a set of predictions of $\hat{y}_{test}$ from $X_{test}$ and create the confidence interval:
\begin{equation}
    \hat{C} = [\hat{y}_{test} - I, \hat{y}_{test} + I].
\end{equation}
Theoretically, if the calibration data and the testing data are exchangeable (meaning we can swap around, or reorder, variables in the data sequence without changing their joint distribution), then we have the following coverage guarantee, for a test point $X_{n+1}$:
\begin{equation}
    \mathbb{P}(Y_{n+1}\notin \hat{C}(X_{n+1})) \leq \alpha. \label{eq:coverage}
\end{equation}
In our application, $\alpha$ is a user-defined parameter that will be designed in collaboration with physicians, more specifically radiation oncologists.

\subsection{Conformal Quantile Regression}
Conformal quantile regression is an alternative method for generating confidence intervals. Under conformal quantile regression, we are able to develop dynamic confidence intervals based upon a low and high quantile regression model. We choose an $\alpha$ value to represent our theoretical miscoverage and use $X_{train}$ and $y_{train}$ to train one quantile regression model as $q_{low}$ to predict $\frac{\alpha}{2}$th-percentile of the data and one quantile regression model as $q_{high}$ to predict the $1 - \frac{\alpha}{2}$th-percentile of the data. Then, we take each model and use it to make predictions on $X_{val}$, obtaining $\hat{y}^{low}_{val}$ which are the predictions made with $q_{low}$ and $\hat{y}^{high}_{val}$ which are the predictions made with $q_{high}$. We calculate non-conformity scores for each set defined as $|\hat{y}^{low}_{val} - y_{val}|$ and $|\hat{y}^{high}_{val} - y_{val}|$ and obtain the largest $(1 - \alpha)$-percentile from each set and define these as $I_{low}$ and $I_{high}$ respectively.
Finally, we use $q_{low}$ and $q_{high}$ to make predictions on $X_{test}$ to obtain $\hat{y}^{low}_{test}$ and $\hat{y}^{high}_{test}$. The resulting confidence interval is:
\begin{equation}
    \hat{C} = [\hat{y}^{low}_{test} - I_{low}, \hat{y}^{high}_{test} + I_{high}].
\end{equation}

Conformal quantile regression achieves the same coverage guarantee as Eq. (\ref{eq:coverage}), but it is a ``conditional'' \footnote{Note the true conditional coverage guarantee is not achievable without further assumptions, here the variation in inputs is achieved by quantile regression.} one, meaning the confidence interval varies with different test data points.

\subsection{Conformal Risk Control}
Conformal risk control is a way to achieve a bounded risk utilizing the uncertainty estimated in the prediction sets. Unlike conformal prediction and conformal quantile regression which try to guarantee that 
    $\mathbb{P}(Y_{n+1}\notin \hat{C}(X_{n+1})) \leq \alpha$,
where $(X_{n+1}, Y_{n+1})$ is a new test point, $\alpha$ is the theoretical miscoverage rate and $\hat{C}$ is the confidence interval, conformal risk control is designed to control a risk function such that 
\begin{equation}
\mathbb{E} \left[ \ell (\hat{C}_{\lambda}(X_{n+1}), Y_{n+1}) \right] \leq \alpha,
\end{equation}
for any bounded loss function \(\ell\) that decreases as the size of \(\hat{C}_{\lambda}(X_{n+1})\) grows \citep{Angelopoulos-2}, where $\lambda$ is a parameter that controls the width of the prediction interval. 

In our model, the risk that we want to control is the sensitivity, the percentage of plans below 95\% GPR that are correctly classified, as we do not want to pass through plans that have below 95\% GPR for a 3\%/3mm criteria.  So we penalize the prediction intervals that are completely above the 95\% threshold when the actual passing rate is below 95\%. We define the risk as follows:
\begin{align}
&\ell (\hat{C}_{\lambda}(X_i), Y_i) = 1, \text{if} \quad \hat{C}_{\lambda}^{\text{low}}(X_i) > 95 \quad \text{and} \quad Y_i< 95, \notag\\
&\ell (\hat{C}_{\lambda}(X_i), Y_i) = 0, \quad \text{otherwise}. \label{eq:loss}
\end{align}

 Similar to previous conformal methods, after training a base model using $X_{train}$ and $y_{train}$, we can use it to make predictions $\hat{y}_{val}$ from $X_{val}$. We calculate nonconformity scores defined as $|\hat{y}_{val} - y_{val}|$. To create our confidence intervals, we choose an $\alpha$ level that represents the desired risk level. Afterwards, we search among a series of $\lambda$ values and utilize $\lambda$ to generate predictive intervals in the validation data as $ \hat{C}_{\lambda}(X_i) =  [\hat{y}_{val} - \lambda* err, \hat{y}_{val} + \lambda* err]$, where 
 \begin{align}
 err = \max |\hat{y}_{val}  - y_{val}|, \notag
 \end{align}
 which is the largest nonconformity score in the validation data.
  For each of $\lambda$, applying the risk function to each data point, we calculate the average risk score on the validation data:
 \begin{align}
 \hat{r}(\lambda) = \frac{1}{n} \sum_i \ell (\hat{C}_{\lambda}(X_i), Y_i). 
 \end{align}

We then choose $\lambda$ when the following is reached:  

\begin{equation}
   \lambda = \inf \{ \lambda: \frac{n}{n+1} \hat{r}(\lambda) + \frac{1}{n+1}\leq \alpha \}, \label{eq:risk_control}
\end{equation}

where $n$ is the number of samples in the calibration data. For more derivation and justification of this formula, we refer the reader to our Appendix \ref{app:derivation}. 
Finally, we use our model to generate predictions for $\hat{y}_{test}$ based upon $X_{test}$ and set our confidence interval $\hat{C}$ as 
\begin{equation}
    \hat{C}_{\lambda} = [\hat{y}_{test} - \lambda*err, \hat{y}_{test} + \lambda*err].
\end{equation}

\subsection{Conformal Training}
In conformal training, the goal is to achieve sharper confidence intervals by utilizing penalty terms associated with the confidence interval during the model training \citep{Stutz}. In normal conformal prediction, the model is trained on the training dataset and then conformal prediction is applied on the validation dataset in a post hoc manner. As a result, the confidence interval is significantly influenced by the trained model but there is no mechanism within the conformal framework to further improve it. In conformal training, the main idea is to incorporate a penalty term associated with the generated confidence interval on the training data in the training process. So after splitting the data, we perform training using each mini-batch of training data with an additional conformal prediction step utilizing validation data and penalize the distance between the lower bound of the confidence interval and the true labels, essentially the size of the interval. Therefore, the model trained in this way will be specifically optimized to generate smaller confidence intervals.
    
By doing so, the model training process is influenced and tailored by the resulting confidence interval, in addition to the original learning loss. 
After training, in our application, we adapt the original conformal training method, as we take the average of the one-side confidence interval during training as $I$. Then, we use this model to make predictions of $\hat{y}_{test}$ based on $X_{test}$ and set our confidence interval to be  $\hat{C} = [\hat{y}_{test} - I, \hat{y}_{test} + I]$. Note that it is a heuristic to average the interval during training. But given validation data is involved in the generation of the confidence interval in each step, it is similar to taking an average of the calibrated interval of an ensembled model. However, this process does not take any risk functions into account. So we cannot directly control the sensitivity, which is the most important metric for the safety of the IMRT QA. Also, conformal training usually loses the coverage guarantee, if there is no additional calibration step. 

\subsection{Training-Aware Conformal Risk Control for IMRT QA}
In our proposed method of training-aware conformal risk control method, we propose to utilize the risk functions actually used in clinical practices in risk control and also incorporate it as the penalty term in conformal training. Specifically,
different from the original conformal training \citep{Stutz},
which set the objective function as MSE + max(0, (lower conformal bound - actual)) + max(0, (actual - upper conformal bound)) with the conformal bounds generated by conformal prediction, we use only the lower bound penalty, which is MSE + max(0, (lower conformal bound - actual value)), and use conformal risk control prediction to generate tighter bands that have a guarantee in risk control for each iteration of conformal training. 
we set the penalty term like how we specify the risk function in the conformal risk control Eq. \eqref{eq:loss}. As for the training process, we operate under a similar protocol as in conformal training in the previous section, but create our confidence intervals for each minibatch via conformal risk control, namely selecting a different $\lambda$ in each minibatch. We take the average of the one-side of confidence interval during training to be $I$. Then, we use this model to make predictions of $\hat{y}_{test}$ based on $X_{test}$ and set our confidence interval to be $\hat{C} = [\hat{y}_{test} - I, \hat{y}_{test} + I]$. Even though we may lose the coverage guarantee in theory, we empirically maintain the coverage, as shown in our experiments. Moreover, as we will discuss later in our paper, over-prediction in our application, especially when the true GPR is low, is much more harmful than under-prediction. Hence coverage may not be the most important metric.

After we obtain the confidence interval, we take a conservative approach to make decisions. We will only classify a treatment plan as safe when the lower bound of the prediction interval is safe. Traditionally, this ``safe'' vs ``unsafe'' decision is usually based on a threshold on a point estimator. Here, to be safe, we utilize the lower bound of the conformal prediction interval. This conservative approach is built into our conformal risk control and training-aware conformal risk control framework by leveraging a risk function like Eq. \eqref{eq:loss}. 

Our objective is to reduce the number of ``safe'' plans that are measured without letting the ``unsafe'' plans get delivered without measurement. So we want a high sensitivity with a high specificity, as well as a good reduction of measurement.
Here the calculation of the reduction of measurement is
\begin{align}
 \frac{\text{Number\ of\ Points\ Predicted\ as\ Safe}}{\text{Total\ Number\ of\ Test\ Points}}.
\end{align}
This is under the assumption that in the ideal case the model is trustworthy and physicists fully trust the model in an ML-assisted human decision-making scenario. In practice, there may exist potential human-AI trust issues, which we will refer to future work. 


\section{Experiments}
\subsection{Datasets}
In total, we use 2 IMRT plan datasets derived from across the Johns Hopkins hospital network in 2023. \textit{Dataset 1} was derived from cases from dates between 6/23 and 8/23. The dataset contains 394 patient plans delivered with the machine Elekta VersaHD and 4 plans are below 95\% GPR on a 3\%/3mm criteria. \textit{Dataset 2} was derived from cases from dates between 9/23 and 12/23. The dataset has 594 patient plans delivered with the machine Elekta VersaHD out of which 15 plans had values below 95\% GPR on a 3\%/3mm criteria. So all plans in \textit{Dataset 1} and \textit{Dataset 2} were performed at the same site on the same machine with different timeframes. We notice that the ratio of the plans below 95\% GPR (label distribution) is very small and also different between two datasets. The patient and plan characteristics, which are the covariates, can also be different. However, the data generating mechanism is totally the same. Specifically, we highlight that the dataset comes from  dosimetrically matched machines, i.e. they are clinically interchangeable. We match the machines based on rigorous physics criteria and believe this facilitates a strengthening of the dataset. So we conduct two sets of experiments: 1) pooling data from \textit{Dataset 1} and \textit{Dataset 2} and random split them into train, calibration, and test, to test the methods in homogeneous data distributions; 2) training and calibrating on one of them and test on the other, to test the methods under data distribution shift. 

\subsection{Experimental Setup}
In our experiment, we train a baseline model before we apply conformal prediction \citep{Angelopoulos-1}, conformal quantile regression \citep{Romano}, conformal risk control \citep{Angelopoulos-2}, conformal training \citep{Stutz}, and our training-aware conformal risk control method. Our base model is an ensemble model from 5 runs of training, each with different model initializations. We also run all these conformal methods 3 times across different train-validation-test splits to generate means and error bars of the result. We take a conservative approach to generate the upper bound of the confidence interval as the largest value of the upper bound estimates made by each model in the ensemble. Similarly, the lower bound of the confidence interval would be the smallest value of the upper bound estimates made by each model in the ensemble. 

In our first set of experiments, we study how different conformal prediction methods performed on a dataset created by pooling \textit{Dataset 1} and \textit{Dataset 2}. We were able to pool the datasets since the plans in both datasets were gathered from the same machine with the same data gathering methodology. We split the pooled dataset into train, validation, and test datasets and then compare the results of conformal prediction, conformal quantile regression, conformal risk control, conformal training, and our training-aware conformal risk control method.

In the second set of experiments, we study how well different conformal methods can deal with potential distribution shifts. In the first part of these experiments, we use \textit{Dataset 1} for our training and validation dataset and \textit{Dataset 2} for testing. 
In the second part of this experiment, we use \textit{Dataset 2} for our training and validation datasets and \textit{Dataset 1} for our testing dataset. In both cases, we perform the exact same conformal prediction methods as in the pooled data. Note in this case, the calibration data is not exchangeable with test anymore.



\begin{table*}[t]
\centering
\small
    \caption{\textit{Pooled Dataset 1 and Dataset 2} Result with a Prospective 95\% Threshold}
    \begin{tabular}{cccccc}
        \hline
        \textbf{Method} & \textbf{Sensitivity} & \textbf{Specificity} & \textbf{Reduction in Measurement} & \textbf{Coverage} & \textbf{ Interval Width} \\ \hline
        Base model & 0 $\pm$ 0 & 0.97 $\pm 0.01$ & 0.95 $\pm$ 0.01 & NA & NA\\ \hline
        CP & 1 $\pm$ 0 & 0 $\pm$ 0.01 & 0 $\pm$ 0.01 & 0.99 $\pm$ 0  & 12.29 $\pm$ 0.18 \\ \hline
        CQR & 1 $\pm$ 0  & 0 $\pm$ 0  & 0 $\pm$ 0  & 0.99 $\pm$ 0.00  & 6.88 $\pm$ 0.23 \\ \hline
        CRC & 1 $\pm$ 0 & 0.28 $\pm$ 0.09  & 0.27 $\pm$ 0.09 & 0.98 $\pm$ 0 & 10.46 $\pm$ 0.14 \\ \hline
        CT &  1 $\pm$ 0  & 0.1 $\pm$ 0.08 & 0.1 $\pm$ 0.08 & 0.98 $\pm$ 0.01  & 12.32 $\pm$ 0.73 \\ \hline
        Ours & 1 $\pm$ 0 & 0.76 $\pm$ 0.04 & 0.75 $\pm$ 0.04 & 0.94 $\pm$ 0.02 & 5.81 $\pm$ 0.22\\ \hline
        \hline
    \end{tabular}
    \label{tab:pool_1}
\end{table*}

\begin{table*}[htbp]
\centering
\small
    \caption{\textit{Pooled Dataset 1 and Dataset 2} Result with a Retrospective Threshold}
    \begin{tabular}{cccccc}
        \hline
        \textbf{Method} & \textbf{Sensitivity} & \textbf{Specificity} & \textbf{Reduction in Measurement} & \textbf{Coverage} & \textbf{ Interval Width} \\ \hline
        Base model & 1 $\pm$ 0 & 0.59 $\pm 0.13$ & 0.58 $\pm$ 0.13 & NA & NA\\ \hline
        CP & 1 $\pm$ 0 & 0.71 $\pm$ 0.07 & 0.69 $\pm$ 0.07 & 0.99 $\pm$ 0  & 12.29 $\pm$ 0.18 \\ \hline
        CQR & 1 $\pm$ 0  & 0.8 $\pm$ 0  & 0.78 $\pm$ 0  & 0.99 $\pm$ 0.00  & 6.88 $\pm$ 0.23 \\ \hline
        CRC & 1 $\pm$ 0  & 0.54 $\pm$ 0.09  & 0.53 $\pm$ 0.09 & 0.98 $\pm$ 0 & 10.46 $\pm$ 0.14 \\ \hline
        CT &  1 $\pm$ 0  & 0.6 $\pm$ 0.28 & 0.58 $\pm$ 0.27 & 0.98 $\pm$ 0.01  & 12.32 $\pm$ 0.73 \\ \hline
        Ours & 1 $\pm$ 0 & 0.82 $\pm$ 0.05 & 0.83 $\pm$ 0.06 & 0.94 $\pm$ 0.02 & 5.81 $\pm$ 0.22\\ \hline
        \hline
    \end{tabular}
    \label{tab:pool_2}
\end{table*}

\subsection{Evaluation Metrics}
To evaluate every conformal prediction method, we analyze the sensitivity, specificity, and reduction in measurement based upon a 95\% GPR threshold as well as retrospectively for the threshold that results in the highest specificity while maintaining the highest sensitivity. The 95\% GPR under 3\%/3mm criteria is selected because 95\% correspond with the universal tolerance and universal action limits and 3\%/3mm criteria is widely accepted. Additionally, we measure the coverage of each method and the average interval width. With high sensitivity, specificity, and reduction in measurement, we would expect the method to achieve high coverage and small interval width.

\subsection{Data Preprocessing}

We use the complexity features of the plan as the input to our models. Table \ref{table:abbreviations} in appendix lists the complexity metrics calculated from each treatment plan with their definitions. Specifically, for the definitions and calculation of the features, we follow previous work \citep{Lam}.
For different analysis criteria, the GPR was recorded for each individual treatment plans and recorded in a corresponding entry in the dataset utilized for the machine learning model input. Note that there are cases when an individual patient has multiple treatment plans. But given each plan is independent and labeled separately, there would not be information leakage even if the same patient's plan appears in training and testing.

\textbf{Feature Selection} In our data exploration, we realize further feature selection can be helpful to further improve the model performance (see Appendix \ref{app:selection} for more details). To select the variables used to train our model, we operate on the assumption that there would be differences in variable distributions between plans that passed the 95\% gamma threshold and those that did not. Our hypothesis is, if a feature $v$ is differentiating between the two classes, ``safe'' ($y=0$) v.s. ``unsafe'' ($y=1$), the distribution of $P(v|y=0)$ should be significantly different from $P(v|y=1)$.  We then perform a 2-sample t-test to look for variables that are statistically different (p
0.05). We find that 12 features have a statistical difference between their distributions between ``safe'' and ``unsafe''. They are PAAJA, PEM, Pgantryvel, PI, PmaxAP\_v, PMAXJ, PmaxnRegs, PMCS, PminAP\_va, PMSAS2, PMUCP, and PuniaccMLC. Detailed description of the features can be found in Appendix \ref{app:features}.  


\subsection{Base Model Training and Hyperparameter Tuning}
The base model is a two-layer MLP for regression in all of our methods. The loss function for training this base model is the mean squared error. All the conformal methods share the same base model that is tuned by hyperparameter tuning. 
During hyper-parameter tuning, we vary the number of nodes in our hidden input layer (50, 100, 200), the activation function (ReLU or Sigmoid), the number of epochs (500, 1000, 1500), and the learning rate (0.1, 0.01, 0.001). We perform a grid search to find hyperparameters that minimize the mean squared error in the validation data. 
All developed models are then trained using hyper-parameters from the best baseline model. More details about the model can be found in the Appendix \ref{apd:first}.

Since there is a huge class imbalance if we consider a passing rate greater than 95\% as ``safe'' and a passing rate below 95\%  as ``unsafe'', we balance the number of plans that were below 95\% and above 95 \% in our training dataset to achieve balanced class weighting in the training. Note that, to avoid data leakage in calibration and evaluation, we do not balance calibration data and test data, hence making sure they satisfy the exchangability assumption in conformal prediction (in the pooled data cases).

\begin{table*}[htbp]
\centering
\small
    \caption{\textit{Dataset 1} Result with a Prospective 95\% Threshold}
    \begin{tabular}{cccccc}
        \hline
        \textbf{Method} & \textbf{Sensitivity} & \textbf{Specificity} & \textbf{Reduction in Measurement} & \textbf{Coverage} & \textbf{ Interval Width} \\ \hline
        Base model & 0.31 $\pm$ 0.10 & 0.97 $\pm 0.01$ & 0.94 $\pm$ 0.01 & NA & NA\\ \hline
        CP & 1 $\pm$ 0 & 0 $\pm$ 0 & 0 $\pm$ 0 & 0.99 $\pm$ 0.01  & 14.56 $\pm$ 0.54 \\ \hline
        CQR & 1 $\pm$ 0  & 0 $\pm$ 0  & 0 $\pm$ 0  & 0.99 $\pm$ 0  & 6.86 $\pm$ 0.31 \\ \hline
        CRC & 0.91 $\pm$ 0.01 & 0.27 $\pm$ 0.17  & 0.26 $\pm$ 0.16 & 0.98 $\pm$ 0 & 9.87 $\pm$ 0.49 \\ \hline
        CT &  0.86 $\pm$ 0.05  & 0.45 $\pm$ 0.06 & 0.44 $\pm$ 0.05 & 0.92 $\pm$ 0.08  & 5.56 $\pm$ 4.40 \\ \hline
        Ours & 0.86 $\pm$ 0.05 & 0.68 $\pm$ 0.04 & 0.66 $\pm$ 0.04 & 0.94 $\pm$ 0.02 & 5.81 $\pm$ 0.22\\ \hline
        \hline
    \end{tabular}
    \label{tab:data1_1}
\end{table*}

\begin{table*}[htbp]
\centering
\small
    \caption{\textit{Dataset 1} Result with a Retrospective Threshold}
    \begin{tabular}{cccccc}
        \hline
        \textbf{Method} & \textbf{Sensitivity} & \textbf{Specificity} & \textbf{Reduction in Measurement} & \textbf{Coverage} & \textbf{ Interval Width} \\ \hline
        Base model & 0.92 $\pm$ 0 & 0.25 $\pm 0.20$ & 0.24 $\pm$ 0.19 & NA & NA\\ \hline
        CP & 1 $\pm$ 0 & 0.03 $\pm$ 0.01 & 0.03 $\pm$ 0.01 & 0.99 $\pm$ 0.01  & 14.56 $\pm$ 0.54 \\ \hline
        CQR & 0.92 $\pm$ 0  & 0.66 $\pm$ 0.03  & 0.64 $\pm$ 0.03  & 0.99 $\pm$ 0  & 6.86 $\pm$ 0.31 \\ \hline
        CRC & 0.97 $\pm$ 0.05  & 0.18 $\pm$ 0.24  & 0.17 $\pm$ 0.23 & 0.98 $\pm$ 0 & 9.87 $\pm$ 0.49 \\ \hline
        CT &  1 $\pm$ 0  & 0.18 $\pm$ 0.15 & 0.18 $\pm$ 0.15 & 0.92 $\pm$ 0.08  & 5.56 $\pm$ 4.40 \\ \hline
        Ours & 1 $\pm$ 0 & 0.33 $\pm$ 0.14 & 0.32 $\pm$ 0.14 & 0.94 $\pm$ 0.02 & 5.81 $\pm$ 0.22\\ \hline
        \hline
    \end{tabular}
    \label{tab:data1_2}
    \vspace{-10pt}
\end{table*}

\section{Results}


We compare our proposed method with baselines, including base model (regression only, no conformal prediction), standard split conformal (CP), conformal quantile regression (CQR), conformal risk control (CRC), and conformal training methods (CT).

Table \ref{tab:pool_1} and Table \ref{tab:pool_2} show the results of using a pooled data and randomly split them into train, validation, and test.  Table \ref{tab:data1_1} and Table \ref{tab:data1_2} show results of
using \textit{Dataset 2} to train and validate, and \textit{Dataset 1} to test. The results of using \textit{Dataset 1} to train and validate, and \textit{Dataset 2} to test can be found in Appendix \ref{app:results}. 
From all the tables, we can easily see the drawbacks of base models only, namely not using conformal prediction methods. When using a prospective 95\% GPR, the sensitivity is very low for the base model. When using a retrospective threshold, the specificity of base model is much lower than other methods. With conformal prediction methodologies, we are able to guarantee 100\% sensitivity in the pooled data, but may not maintain it under data distribution shift. However, all methods manage to achieve coverage above 0.9, even under distribution shifts. Overall, the performance under distribution shift is worse than the pooled data, highlighting the importance of exchangability assumption in the conformal prediction.

CP has the widest band among all methods. While it managed to achieve a high sensitivity, it has a lower specificity and reduction of measurement than the proposed method.  
We can see that there is an improvement in CQR's performance over CP in specificity and reduction in measurement under retrospective thresholds in both pooled data and shifted data. But it still underperforms the proposed method. 
CRC manages to increase the specificity and reduction in measurement considerably over CQR in the prospective threshold case but is worse in the retrospective threshold case. CT, being designed to learn to make better predictions by taking into account the width of conformal predictions,  surprisingly, does not shrink the interval width much. Its specificity and measurement reduction is also underperforming the proposed method. Finally, with our new method of training-aware conformal risk control, we manage to achieve the highest specificity and reduction in measurement, while maintaining 100\% sensitivity and a relatively small interval width in the pooled data.  Our specificity and measurement reduction is still competitive under distribution shift in Table \ref{tab:data1_1} and Table \ref{tab:data1_2} but the sensitivity is degraded in the prospective threshold case.

\section{Conclusion and Discussion}
In this work, we propose to leverage methods in the conformal prediction framework to develop a machine-learning-assisted IMRT QA and treatment plan triage solution. Conformal prediction methods are particularly suitable for this problem because IMRT QA is a safety-critical task whose risk needs to be carefully controlled. We propose a training-aware conformal risk control method by incorporating the conformal risk control into the training process. We also incorporate the actual decision-making threshold on
the GPR and the risk functions
used in clinical evaluation into the design of the
risk control framework. We compare with various conformal methods in baselines and our method achieves
high sensitivity and specificity, significantly
reducing the measurement without generating a
huge confidence interval. Our results demonstrate the validity and applicability of conformal prediction methods for improving efficiency
and reducing the workload of the IMRT QA process.


\textbf{Limitation of the small data and distribution shift}
One limitation of our work is our data sample size is relatively small. 
We collected data in a relatively short timeframe, due to clinical reasons. The clinical software from which our treatment plans are created was recently commissioned near March 2023 and thus our dataset was limited in this regard. We aim to collect data in a longer time frame in the future to further validate our proposed methods. In the future, we also aim to incorporate conformal prediction methods developed specifically under data distribution shift \citep{tibshirani2019conformal,prinster2023jaws,gibbs2021adaptive,prinster2022jaws,podkopaev2021distribution} to further improve our model.

\textbf{Clinical Deployment and Future Improvements}
In practice, medical physicists may have their own criteria for determining the safety beyond using a GPR. For example, they may want to utilize visualizations of different treatment regions and disease sites, because the same GPR may mean very different quality when it comes to sensitive regions and insensitive regions of the patient body. Therefore, we aim to develop more interpretable prediction models with visualizations to improve the ML-assisted decision-making process in future work. 

\section*{Acknowledgments}
We would like to thank the anonymous reviewers for their helpful comments. AL was partially supported by the Amazon Research Award, the Discovery Award of the Johns Hopkins University, a seed grant from the JHU Institute of Assured Autonomy (IAA), and a seed grant from the JHU Center for Digital Health and Artificial Intelligence (CDHAI).



\bibliography{references}

\appendix
\newpage
\onecolumn

\section{More Illustration of the IMRT Process}
To further illustrate the IMRT process, we demonstrate the following figures. Figure depicts instance of an aperture shape and ring depicts the normalized output as a function of rotational gantry position. Our complexity features describe the treatment plan. More details about the features can be found in Appendix \ref{app:features}. 

\begin{figure}[h]
\begin{tabular}{cc}
\centering
\includegraphics[width=0.48\textwidth]{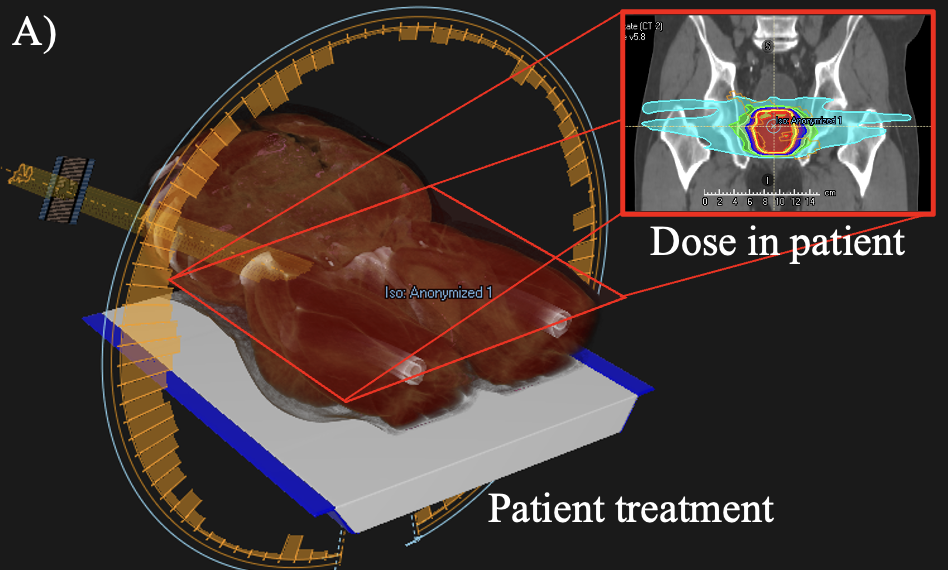}&
\includegraphics[width=0.48\textwidth]{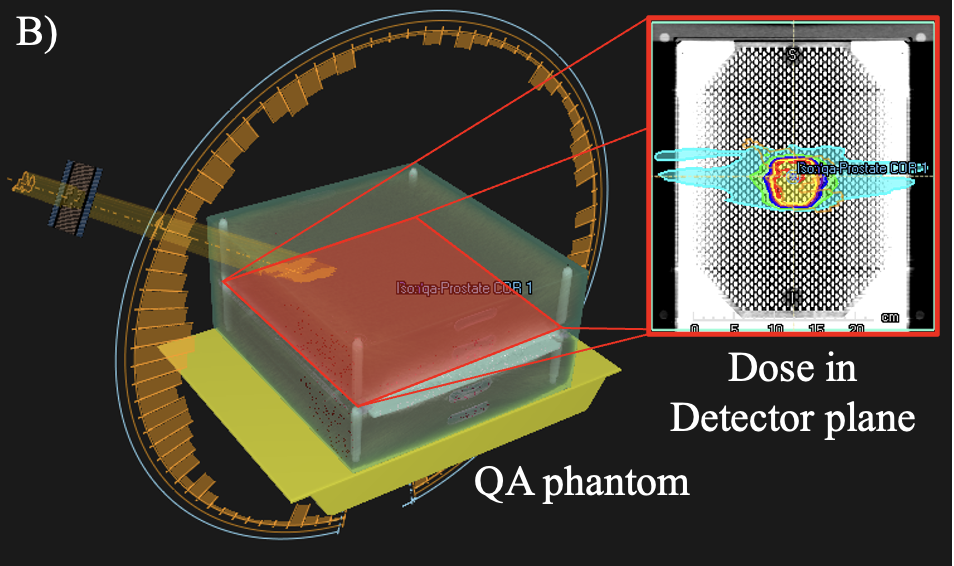}
\end{tabular}
\caption{Depiction of coronal absorbed dose distribution in the A) patient and correspondingly in the B) quality assurance phantom detector plane. }
\end{figure}

\section{Feature Descriptions and More Details on Feature Extraction}
\label{app:features}
As mentioned, we follow the feature definition and calculation in the previous work \citep{Lam}. We demonstrate their feature names and descrptions in Table \ref{table:abbreviations}.
\begin{table*}[h]
\centering
\begin{tabular}{|p{3cm}|p{12cm}|}
\hline
\textbf{Abbreviation} & \textbf{Description} \\
\hline
BA & Beam aperture area weighted by MU \\
BI & Beam irregularity \\
BM & Fraction of BA normalized by UAA \\
UAA & Union area of aperture (UAA) \\
MFAS2,5,10,20 & Mean of fraction of aperture smaller (MFAS) than 2, 5, 10, 20 mm \\
MaxFAS2,5,10,20 & Max of fraction of aperture smaller (MaxFAS) than 2, 5, 10, 20 mm \\
MAA & Mean aperture area \\
MAD & Maximum distance of the mid-point between any open leaf-pair in a beam \\
MUCP & Mean of MUs per control point in a beam \\
MLO1,2,3,4,5 & Moment order of 1, 2, 3, 4, 5 of leaf openings \\
minAP\_h & Minimum aperture perimeter in horizontal direction \\
maxAP\_h & Maximum aperture perimeter in horizontal direction \\
minAP\_v & Minimum aperture perimeter in vertical direction \\
maxAP\_v & Maximum aperture perimeter in vertical direction \\
maxRegs & Maximum number of regions in the beam \\
AAJA & Ratio of the average area of an aperture over the area defined by jaws \\
MAXJ & Maximum of x-y jaw positions \\
MCS & Modulation complexity score \\
EM & Edge metric: ratio of MLC side-length to aperture area \\
\hline
\end{tabular}
\caption{Description of Feature Variables}
\label{table:abbreviations}
\end{table*}

\section{Feature Selection Process}\label{app:selection}
In our data exploration and feature selection process, we experiment with different methods. Table \ref{tab:feature} shows our results when we use different models on \textit{Dataset 2}. We can see that feature selection utilizing our method is outperforming random forests and Lasso regression using the full features.

\begin{table*}[h]
\centering
\small
\caption{\textit{Results comparison for different feature selection methods using a 99.99 threshold}}
\label{tab:feature}
\begin{tabular}{|c|c|c|c|}
\hline
Method & Sensitivity & Specificity & Reduction in Measurement \\ 
\hline
Full Variable Random Forest & 1 & 0.03 & 0.03 \\
\hline
Full Variable ElasticNet & 1 & 0.15 & 0.14 \\
\hline
Full Variable MLP & 0.73 & 0.33 & 0.32 \\
\hline
Full Variable LassoRegressor & 1 & 0 & 0 \\
\hline 
Feature Selected MLP & 1 & 0.33 & 0.32\\
\hline
\end{tabular}
\end{table*}

\section{Model Hyperparameters}\label{apd:first}
In our experiments, we conduct grid search to find the best parameters for training the base model. The resulting hyperparameters we use are as follows:
Hidden Nodes: 100, Activation Function: sigmoid, Epochs: 1500, Learning Rate: 0.01

In conformal methods, we use the following miscoverage level or risk control level to generate the confidence intervals:

\begin{enumerate}
    \item Conformal Prediction:  Miscoverage level $\alpha=0.1$ 
    \item Conformal Quantile Regression: Percentile: [5,95], Miscoverage level $\alpha=0.1$ 
    \item Conformal Risk Control:  Risk level $\alpha=0.1$ 
    \item Conformal Training: Miscoverage level $\alpha=0.1$ 
    \item The proposed method: Risk level $\alpha=0.1$ 
\end{enumerate}

\section{Evaluation Metrics}\label{apd:second}

We define sensitivity, specificity, reduction in measurement, and coverage as follows:

Sensitivity: 
\begin{equation}
    \frac{\text{Number\ of\ points\ with\ Predicted\ Gamma\ Passing\ Rate\ below\ 95\ and\ Actual\ Gamma\ Passing\ Rate\ below\ 95}}{\text{Total\ Number\ of\ Actual\ Gamma\ Passing\ Rate\ below\ 95}}
\end{equation}
Specificity:
\begin{equation}
    \frac{\text{Number\ of\ points\ with\ Predicted\ Gamma\ Passing\ Rate\ above\ 95\ and\ Actual\ Gamma\ Passing\ Rate\ above\ 95}}{\text{Total\ Number\ of\ Actual\ Gamma\ Passing\ Rate\ above\ 95}}
\end{equation}

Reduction in Measurement: 

\begin{equation}
    \frac{\text{Number\ of\ Points\ Predicted\ as\ Safe}}{\text{Total\ Number\ of\ Test\ Points}}
\end{equation}

Coverage:
\begin{equation}
\frac{\text{Number\ of\ Actual\ Points\ within\ Upper\ and\ Lower\ Prediction\ Band}}{\text{Total\ Number\ of\ Test\ Points}}    
\end{equation}

\section{Additional Experimental Results}
\label{app:results}
We have also conducted experiment on training with \textit{Dataset 1} and testing with \textit{Dataset 2}. Table \ref{tab:data2_1} and \ref{tab:data2_2} demonstrate the results. It further shows how methods can be influenced by data distribution shift. For most methods, their specificity and reduction in measurement are much lower than the pooled data cases. 
\begin{table*}[h]
\centering
\small
    \caption{\textit{Dataset 2} Results with a Prospective 95\% Threshold}
    \begin{tabular}{cccccc}
        \hline
        \textbf{Method} & \textbf{Sensitivity} & \textbf{Specificity} & \textbf{Reduction in Measurement} & \textbf{Coverage} & \textbf{Interval Width} \\ \hline
        Base model & 0.18 $\pm$ 0.04 & 0.96 $\pm$ 0.01 & 0.94 $\pm$ 0.01 & NA & NA\\ \hline
        CP & 1 $\pm$ 0 & 0 $\pm$ 0.01 & 0 $\pm$ 0.01 & 1 $\pm$ 0 & 12.96 $\pm$ 0.24\\ \hline
        CQR & 1 $\pm$ 0 & 0.03 $\pm$ 0 & 0.03 $\pm$ 0 & 0.99 $\pm$ 0 & 10.51 $\pm$ 0.05\\ \hline
        CRC & 1 $\pm$ 0 & 0.37 $\pm$ 0.12 & 0.36 $\pm$ 0.11 & 0.98 $\pm$ 0.01 & 9.71 $\pm$ 0.34 \\ \hline
        CT & 1 $\pm$ 0 & 0 $\pm$ 0 & 0 $\pm$ 0 & 0.98 $\pm$ 0 & 15.56 $\pm$ 0.55 \\ \hline
        Ours & 1 $\pm$ 0 & 0.33 $\pm$ 0.12 & 0.32 $\pm$ 0.11 & 0.90 $\pm$ 0.02 & 8.34 $\pm$ 0.60 \\ \hline
    \end{tabular}
    \label{tab:data2_1}
\end{table*}    

\begin{table*}[htbp]
\centering
\small
    \caption{\textit{Dataset 2} Result with a Retrospective Threshold}
    \begin{tabular}{cccccc}
        \hline
        \textbf{Method} & \textbf{Sensitivity} & \textbf{Specificity} & \textbf{Reduction in Measurement} & \textbf{Coverage} & \textbf{ Interval Width} \\ \hline
        Base model & 0.98 $\pm$ 0.04 & 0.57 $\pm 0.05$ & 0.58 $\pm$ 0.08 & NA & NA\\ \hline
        CP & 1 $\pm$ 0 & 0.35 $\pm$ 0.12 & 0.34 $\pm$ 0.12 & 1 $\pm$ 0 & 12.96 $\pm$ 0.24\\ \hline
        CQR & 1 $\pm$ 0  & 0.22 $\pm$ 0.03  & 0.21 $\pm$ 0.02  & 0.99 $\pm$ 0  & 10.51 $\pm$ 0.05 \\ \hline
        CRC & 1 $\pm$ 0 & 0.43 $\pm$ 0.09 & 0.42 $\pm$ 0.09 & 0.98 $\pm$ 0.01 & 9.71 $\pm$ 0.34 \\ \hline
        CT & 1 $\pm$ 0 & 0.36 $\pm$ 0.02 & 0.35 $\pm$ 0.02 & 0.98 $\pm$ 0 & 15.56 $\pm$ 0.55 \\ \hline
        Ours & 1 $\pm$ 0 & 0.47 $\pm$ 0.06 & 0.45 $\pm$ 0.07 & 0.90 $\pm$ 0.02 & 8.34 $\pm$ 0.60 \\ \hline
        \hline
    \end{tabular}
    \label{tab:data2_2}
\end{table*}

\section{Conformal Risk Control Derivation} \label{app:derivation}

We include the derivation of the formula in Conformal Risk Control Eq. (\ref{eq:risk_control}) for reader's reference. Much of the content is directly from \citep{Angelopoulos-2}.

\begin{theorem}
Assume that the loss function is defined as:
\[
\ell(C_\lambda(X_i), Y_i) = 
\begin{cases} 
1, & \text{if } C_{\text{low},\lambda}(X_i) > 95 \text{ and } Y_i < 95, \\
0, & \text{otherwise},
\end{cases}
\]
and that $\ell(C_\lambda(X_i), Y_i)$ is non-increasing in $\lambda$, right-continuous, and
\[
\ell(C_\lambda(X_i), Y_i) \leq \alpha, \quad \sup_\lambda \ell(C_\lambda(X_i), Y_i) \leq B < \infty \quad \text{almost surely}.
\]
Then
\[
\mathbb{E}[\ell(C_{\hat{\lambda}}(X_{n+1}), Y_{n+1})] \leq \alpha.
\]
\end{theorem}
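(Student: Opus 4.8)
The plan is to follow the standard conformal risk control argument, specializing it to the $0/1$ loss at hand, for which the almost-sure bound is $B = 1$ (matching the $\tfrac{1}{n+1}$ inflation term in Eq. \eqref{eq:risk_control}). First I would abbreviate $L_i(\lambda) := \ell(\hat{C}_\lambda(X_i), Y_i)$ for $i = 1, \dots, n+1$ and note that exchangeability of the pairs $(X_i, Y_i)$ makes the collection of random curves $\{L_i(\cdot)\}_{i=1}^{n+1}$ exchangeable. By hypothesis each $L_i(\cdot)$ is non-increasing in $\lambda$, right-continuous, and bounded by $B = 1$; I would also note explicitly that the specific loss in Eq. \eqref{eq:loss} is non-increasing because a larger $\lambda$ lowers $\hat{C}^{\text{low}}_\lambda$, which can only turn the indicator from $1$ to $0$.

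The core device is to compare the data-driven threshold $\hat\lambda$ of Eq. \eqref{eq:risk_control} against an oracle threshold that also sees the test point. I would define the full empirical risk $R_{n+1}(\lambda) := \tfrac{1}{n+1}\sum_{i=1}^{n+1} L_i(\lambda)$ and the oracle $\hat\lambda^\ast := \inf\{\lambda : R_{n+1}(\lambda) \le \alpha\}$. Since $L_{n+1}(\lambda) \le 1$, I have the pointwise bound $R_{n+1}(\lambda) \le \tfrac{n}{n+1}\hat r(\lambda) + \tfrac{1}{n+1}$, so the defining set of $\hat\lambda$ is contained in that of $\hat\lambda^\ast$, giving $\hat\lambda \ge \hat\lambda^\ast$. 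Because $L_{n+1}$ is non-increasing, this yields $L_{n+1}(\hat\lambda) \le L_{n+1}(\hat\lambda^\ast)$ and hence $\mathbb{E}[L_{n+1}(\hat\lambda)] \le \mathbb{E}[L_{n+1}(\hat\lambda^\ast)]$.

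The final step is the symmetrization. The key observation is that $\hat\lambda^\ast$ is a permutation-invariant function of the curves $L_1, \dots, L_{n+1}$, so swapping index $n+1$ with any $j$ leaves it unchanged. Exchangeability then gives $\mathbb{E}[L_{n+1}(\hat\lambda^\ast)] = \mathbb{E}[L_j(\hat\lambda^\ast)]$ for every $j$, and averaging over $j$ converts this into $\mathbb{E}[L_{n+1}(\hat\lambda^\ast)] = \mathbb{E}[R_{n+1}(\hat\lambda^\ast)]$. By right-continuity of $R_{n+1}$ together with the infimum definition of $\hat\lambda^\ast$, the bound $R_{n+1}(\hat\lambda^\ast) \le \alpha$ holds almost surely, so its expectation is at most $\alpha$. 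Chaining the inequalities gives $\mathbb{E}[L_{n+1}(\hat\lambda)] \le \mathbb{E}[L_{n+1}(\hat\lambda^\ast)] = \mathbb{E}[R_{n+1}(\hat\lambda^\ast)] \le \alpha$.

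I expect the main obstacle to be making the symmetrization step fully rigorous: one must verify that $\hat\lambda^\ast$ depends on the sample only through the unordered multiset of loss curves, so that the permutation-invariance argument legitimately applies, and that right-continuity is exactly what guarantees the infimum itself satisfies $R_{n+1}(\hat\lambda^\ast) \le \alpha$ rather than only approaching $\alpha$ from above. The boundedness assumption $B < \infty$ is what lets the single test-point term inflate the estimate by only $\tfrac{1}{n+1}$; I would confirm that the $0/1$ loss indeed gives $B = 1$, so the additive constant in Eq. \eqref{eq:risk_control} is the correct one.
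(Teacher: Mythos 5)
Your proposal is correct and follows essentially the same argument as the paper's proof (itself from Angelopoulos et al.): compare $\hat\lambda$ to the oracle $\hat\lambda^\ast$ defined from the full empirical risk including the test point, use $B=1$ to get $\hat\lambda \ge \hat\lambda^\ast$, invoke monotonicity, and symmetrize via exchangeability. The only cosmetic difference is that you phrase the symmetrization through permutation invariance of $\hat\lambda^\ast$ and averaging over indices, whereas the paper conditions on the multiset $E$ of loss curves and uses the uniform conditional law of the test curve; these are the same step in two standard guises, and your explicit check that the $0/1$ loss of Eq.~\eqref{eq:loss} is non-increasing in $\lambda$ is a detail the paper simply assumes.
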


\begin{proof}
Let $\hat{R}_{n+1}(\lambda) = (\ell(C_\lambda(X_1), Y_1) + \ldots + \ell(C_\lambda(X_n), Y_n))/(n+1)$ and define
\[
\hat{\lambda}' = \inf \left\{ \lambda \in \Lambda : \hat{R}_{n+1}(\lambda) \leq \alpha \right\}.
\]

Since $\inf_\lambda \ell(C_\lambda(X_i), Y_i) = \ell(C_{\lambda_{\max}}(X_i), Y_i) \leq \alpha$, $\hat{\lambda}'$ is well-defined almost surely. Since $\ell(C_\lambda(X_i), Y_i) \leq B$, we know
\[
\frac{n}{n+1} \hat{R}_n(\lambda) + \frac{\ell(C_\lambda(X_{n+1}), Y_{n+1})}{n+1} \leq \frac{n}{n+1} \hat{R}_n(\lambda) + \frac{B}{n+1}.
\]

Thus,
\[
\frac{n}{n+1} \hat{R}_n(\hat{\lambda}) + \frac{B}{n+1} \leq \alpha \implies \hat{R}_{n+1}(\hat{\lambda}) \leq \alpha.
\]

This implies $\hat{\lambda}' \leq \hat{\lambda}$ when the LHS holds for some $\lambda \in \Lambda$. When the LHS is above $\alpha$ for all $\lambda \in \Lambda$, by definition, $\hat{\lambda} = \lambda_{\max} \geq \hat{\lambda}'$. Thus, $\hat{\lambda}' \leq \hat{\lambda}$ almost surely. Since $\ell(C_\lambda(X_i), Y_i)$ is non-increasing in $\lambda$,
\[
\mathbb{E}[\ell(C_{\hat{\lambda}}(X_{n+1}), Y_{n+1})] \leq \mathbb{E}[\ell(C_{\hat{\lambda}'}(X_i), Y_i)].
\]

Let $E$ be the multiset of loss functions $\{\ell(C_\lambda(X_1), Y_1), \ldots, \ell(C_\lambda(X_{n+1}), Y_{n+1})\}$. Then $\hat{\lambda}'$ is a function of $E$, or, equivalently, $\hat{\lambda}'$ is a constant conditional on $E$. Additionally, $\ell(C_\lambda(X_{n+1}), Y_{n+1}) \mid E \sim \text{Uniform}(\{\ell(C_\lambda(X_1), Y_1), \ldots, \ell(C_\lambda(X_{n+1}), Y_{n+1})\})$ by exchangeability. These facts, combined with the right-continuity of $\ell(C_\lambda(X_i), Y_i)$, imply
\[
\mathbb{E}[\ell(C_{\hat{\lambda}'}(X_{n+1}), Y_{n+1}) \mid E] = \frac{1}{n+1} \sum_{i=1}^{n+1} \ell(C_{\hat{\lambda}'}(X_i), Y_i) \leq \alpha.
\]

The proof is completed by the law of total expectation and the properties of the loss function.
\end{proof}


\end{document}